\documentclass[a4paper,superscriptaddress,preprintnumbers,floatfix,letter]{elsarticle}

\usepackage{amssymb}
\usepackage{amsthm}

\usepackage{xcolor}
\usepackage{amscd}
\usepackage{amsmath}
\usepackage{bm}
\usepackage{comment}
\usepackage{hyperref}  
\usepackage{mathrsfs}  
\usepackage{enumitem}  
\usepackage{subcaption}  
\usepackage{tikz}
\usetikzlibrary{arrows.meta} 
\usepackage[outline]{contour} 
\contourlength{1.4pt}
\tikzset{
  >=latex, 
  node/.style={thick,circle,draw=myblue,minimum size=22,inner sep=0.5,outer sep=0.6},
  node in/.style={node,green!20!black,draw=mygreen!30!black,fill=mygreen!25},
  node hidden/.style={node,blue!20!black,draw=myblue!30!black,fill=myblue!20},
  node convol/.style={node,orange!20!black,draw=myorange!30!black,fill=myorange!20},
  node out/.style={node,red!20!black,draw=myred!30!black,fill=myred!20},
  connect/.style={thick,mydarkblue}, 
  connect arrow/.style={-{Latex[length=4,width=3.5]},thick,mydarkblue,shorten <=0.5,shorten >=1},
  node 1/.style={node in}, 
  node 2/.style={node hidden},
  node 3/.style={node out}
}

\colorlet{myred}{red!80!black}
\colorlet{myblue}{blue!80!black}
\colorlet{mygreen}{green!60!black}
\colorlet{myorange}{orange!70!red!60!black}
\colorlet{mydarkred}{red!30!black}
\colorlet{mydarkblue}{blue!40!black}
\colorlet{mydarkgreen}{green!30!black}

\usepackage[figuresright]{rotating}

\DeclareFontFamily{T1}{calligra}{}
\DeclareFontShape{T1}{calligra}{m}{n}{<->s*[1.44]callig15}{}
\DeclareMathAlphabet\mathnn       {T1}{pzc} {m} {n}

\newtheorem{thm}{Theorem}[section]
\newtheorem{prop}{Proposition}[section]
\newtheorem{defin}{Definition}

\newtheorem{cor}{Corollary}[section]

\begin{document}

\begin{frontmatter}




\title{Plateaus, Optima, and Overfitting in Multi-Layer Perceptrons: A Saddle-Saddle-Attractor Scenario}


\author[l1]{Alex Alì Maleknia\footnote{Present address: IMAG, IROKO, Univ Montpellier, Inria, CNRS}}
\ead{alex.maleknia@inria.fr}

\author[l1,l2]{Yuzuru Sato}  
\ead{ysato@math.sci.hokudai.ac.jp}
\address[l1]{Department of Mathematics, Hokkaido University,  Kita 12 Nishi 7, Kita-ku, Sapporo, Hokkaido 060-0812, Japan} 
\address[l2]{RIES, Hokkaido University,  Kita 12 Nishi 7, Kita-ku, Sapporo, Hokkaido 060-0812, Japan} 

\begin{abstract}
Vanishing gradients and overfitting are central problems in machine learning, yet are typically analyzed in asymptotic regimes that obscure their dynamical origins. Here we provide a dynamical description of learning in multi-layer perceptrons (MLPs) via a minimal model inspired by Fukumizu and Amari. We show that training dynamics traverse plateau and near-optimal regions, both organized by saddle structures, before converging to an overfitting regime. Under suitable conditions on the data, this regime collapses to a single attractor modulo symmetry. 
Furthermore, for finite noisy datasets, convergence to the theoretical optimum is impossible, and the dynamics necessarily settle into an overfitting solution.
\end{abstract}

\begin{keyword}
multi-layer perceptrons \sep gradient descent \sep vanishing gradient \sep overfitting


\end{keyword}

\end{frontmatter}


\section{Introduction}

Over the past two decades, substantial effort has been devoted to improving gradient-based training of neural networks \cite{Kingma2014AdamAM,jordan2024muon,AD_for_plateaus,plateau_relu}. Here we analyse a minimal dynamical scenario capturing two central issues: vanishing gradients and overfitting in gradient descent for multi-layer perceptrons. Motivated by \cite{Amari,optimal_structure,Sato}, we consider a Fukumizu\textendash Amari model with observational noise and study its learning dynamics from a dynamical systems perspective. Our main result (Theorem \ref{th:main}) shows that, except for a set of measure zero and with high probability, all trajectories converge to overfitting solutions when the number of data points is large or the data variance is small. We further conjecture that the transient dynamics are organized by numerous saddle structures \cite{zhang2026saddletosaddle}, including plateaus and near-optimal states. Based on this observation, we propose that a generic saddle-saddle-attractor scenario governs these classes of machine learning systems.

The structure of the paper is as follows.
After introducing the general notation and definitions in Section \ref{sec:prelim}, we present the main theoretical analysis concerning the overfitting region and the optimal region in Section \ref{sec:thms}.
To complement the analysis, Section \ref{sec:min_model} provides numerical experiments conducted within the minimal gradient descent model.
The purpose is to illustrate the vanishing gradient phenomenon and overfitting in their essential forms.
The final section summarizes the results and outlines directions for future research.

\section{Preliminaries}\label{sec:prelim}

\subsection{Gradient descent for multi-layer perceptrons}


A \textit{multi-layer perceptron} (MLP) \cite{Universal_NN,Bishop} consists of a system of interconnected nodes (called \textit{neurons}), typically represented as a computational graph, as illustrated in Fig.~\ref{fig:MLP}.

\begin{figure}[hbt]
    \centering
    \begin{tikzpicture}[x=2cm,y=1cm]
        \message{^^JNeural network activation}
        \def\NI{2} 
        \def\Na{4} 
        \def\Nb{5} 
        \def\NO{1} 
        \def\yshift{0.5} 
        
        \node[node in,outer sep=0.6] (NI-1) at (0,0) {$1$};
        \node[node in,outer sep=0.6] (NI-2) at (0,-2) {$x$};
        
        \foreach \i [evaluate={\c=int(\i==\Na); \y=\Na/2-\i-\c*\yshift; \index=(\i<\Na?int(\i):"m");}]
            in {\Na,...,1}{ 
            \ifnum\i>1
            \node[node hidden]
            (Na-\i) at (1,\y) {$\sigma$};
            \foreach \j in {1,...,\NI}{ 
            \draw[connect arrow,line width=1.2] (NI-\j) -- (Na-\i);
            }
            \else
            \node[node in]
            (Na-\i) at (1,\y) {$1$};
            \fi
            }
        
        \foreach \i [evaluate={\c=int(\i==\Nb); \y=\Nb/2-\i-\c*\yshift; \index=(\i<\Nb?int(\i):"m");}]
            in {\Nb,...,1}{ 
            \ifnum\i>1
            \node[node hidden]
            (Nb-\i) at (2,\y) {$\sigma$};
            \foreach \j in {1,...,\Na}{ 
            \draw[connect arrow,line width=1.2] (Na-\j) -- (Nb-\i);
            }
            \else
            \node[node in]
            (Nb-\i) at (2,\y) {$1$};
            \fi
            }

        \node [node out] (1) at (3, -1/2) {$y$};
        \foreach \j in {1,...,\Nb}{ 
        \draw[connect arrow,line width=1.2] (Nb-\j) -- (1);
        }
        
        \path (Na-\Na) --++ (0,1+\yshift) node[midway,scale=1] {$\vdots$};
        \path (Nb-\Nb) --++ (0,1+\yshift) node[midway,scale=1] {$\vdots$};
    
        \node[above=5,align=center,mygreen!60!black] at (0,1.3) {input\\[-0.2em]layer};
        \node[above=2,align=center,myblue!60!black] at (1.5,1.8) {hidden layers};
        \node[above=8,align=center,myred!60!black] at (3,1.3) {output\\[-0.2em]layer};
    \end{tikzpicture}
    \caption{A multi-layer perceptron with two hidden layers of arbitrary size.}
    \label{fig:MLP}
\end{figure}
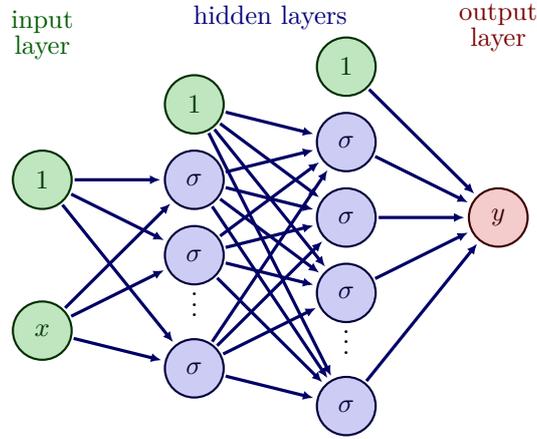

An MLP $\mathnn{N}$ naturally defines a function $f_\mathnn{N}:\mathbb{R}^{d_{in}}\to\mathbb{R}^{d_{out}}$, which depends on the architecture, the activation function $\sigma:\mathbb{R}\to\mathbb{R}$, and the weights of the edges of $\mathnn{N}$.\\ 
For the sake of simplicity, we will focus our efforts on the case of networks with only one hidden layer (3-layer perceptrons). 
In this setting, we will use the following notation:
\begin{itemize}
    \item $m$: number of neurons in the hidden layer;
    \item $\boldsymbol{w}=(w_{i,j})$: weights connecting the input layer to the hidden layer for $i=1,\dots,d_{in}$, $j=1,\dots,m$
    \item $\boldsymbol{v}=(v_{j,k})$: weights connecting the hidden layer to the output layer for $j=1,\dots,m$, $k=1,\dots,d_{out}$;
    \item $\boldsymbol{b}=(b_{1,j},b_{2,k})$: weights of the constant $1$ in the first and second layer (called \textit{biases}) for $j=1,\dots,m$, $k=1,\dots,d_{out}$;
    \item $\boldsymbol{\theta}=(\boldsymbol{v},\boldsymbol{w},\boldsymbol{b})$: the parameter of $\mathnn{N}$;
    \item $\boldsymbol{\Theta}_m=\mathbb{R}^{m(d_{in}+d_{out}+1)+d_{out}}$: the space of all possible parameters. 
\end{itemize}
With the above notation, we can write $f_{\mathnn{N}}$ explicitly as:
\begin{equation*}
    f_{\mathnn{N}}(x)=f(x;\boldsymbol{\theta})=\sum_{i=1}^m v_i\,\sigma(w_i\cdot x+b_{1,i})+b_2 \, .
\end{equation*}

Given a dataset $D^n=\left\{(x_i,y_i)\right\}_{i=1}^n$, we want to find the best fitting function for the data among the $m$-neuron MLP.
This problem is equivalent to finding the best parameter $\boldsymbol{\theta}\in\boldsymbol{\Theta}_m$ such that $f(x;\boldsymbol{\theta})$ minimizes the 
\textit{training error}:
\begin{equation}
    L(\boldsymbol{\theta};D^n)=\frac{1}{2n}\sum_{i=1}^n h^2(x_i,y_i;\boldsymbol{\theta}),
\end{equation}
where $h(x,y;\boldsymbol{\theta})=\|f(x;\boldsymbol{\theta})-y\|_2$.\\
In order to find the best parameter, we will use the gradient descent algorithm.
Fixed an initial condition $\boldsymbol{\theta}_0\in\boldsymbol{\Theta}_m$, the update rule is:
\begin{equation}\tag{GD}\label{eq:GD}
    \boldsymbol{\theta}(t+1)=\boldsymbol{\theta}(t)-\eta\nabla_{\boldsymbol{\theta}}L(\boldsymbol{\theta}(t);D^n)
\end{equation}
for each $t\in\mathbb{N}$.


\subsection{Vanishing gradient}
In many settings throughout machine learning \cite{pascanu13, plateau_relu, Montanari,Amari,AD_for_plateaus}, it has been observed that the training process of a neural network can be slowed down when the gradient of the loss function remains close to zero for a long period, before suddenly increasing. 
This phenomenon is referred to as the \textit{vanishing gradient}, and the resulting slow dynamics is known as the \textit{plateau phenomenon}.
The underlying causes of this behaviour in the general setting remain unclear. 
The study reported in \cite{Amari} shows that, for an MLP with the hyperbolic tangent activation function, one possible cause of the vanishing gradient is that the parameter $\boldsymbol{\theta}$ approaches a \textit{singular region} where the network becomes reducible \cite{optimal_structure}; that is, the function $f(x;\boldsymbol{\theta})$ can be represented by a strictly smaller network. 
In addition, they propose a minimal model, which we refer to as the Fukumizu–Amari model, to investigate the plateau phenomenon.
This model serves as the main inspiration for our model: 
\begin{itemize}
    \item $d_{in}=d_{out}=1$;
    \item $D^n=D^n(T)=\left\{ (x_i,T(x_i)) \right\}_{i=1}^n$ where $T(x)=2\tanh(x)-\tanh(4x)$ and it is called \textit{target function};
    \item the considered model is a 3-layer, 2-neuron MLP without bias terms, i.e.:
$$f(x;\boldsymbol{\theta})=v_1 \tanh(w_1 x)+v_2 \tanh(w_2 x).$$
\end{itemize}
In this same setting, the authors of \cite{Sato} showed that, when using online learning, for an optimal range of variance of the data sampling, the plateaus are enhanced due to \textit{noise-induced synchronization} \cite{noise_synch}. 

\subsection{Overfitting}
While the plateau phenomenon is an intrinsic structural problem for neural networks trained with gradient descent, overfitting is more closely related to learning excessive information from the training data.
In real-world applications, such unimportant information typically resides in observational noise, which was not modelled in the work of Fukumizu and Amari \cite{Amari}.
Thus, we would like to consider a dataset of the form $D=D^n_\tau(\omega;T)=\left\{(x_i(\omega),y_i(\omega))\right\}_{i=1}^n$, in which, given $\omega\in\Omega$ probability space:
\begin{eqnarray*}
    x_i&&\mathop{\sim}\limits^{\text{i.i.d.}}\rho \text{ (probability distribution on $\mathbb{R}^{d_{in}}$);}\\
    y_i(\omega) &&= \,\,T(x_i(\omega))+\xi_i(\omega)\,; \\
\xi_i&&\mathop{\sim}\limits^{\text{i.i.d.}}\mathcal{N}(0,\tau^2) \text{ (observational Gaussian noise).}
\end{eqnarray*}
When a target function $T$ is given, we can define the 
\textit{generalization error} as:
\begin{equation}
    \begin{aligned}
        R(\boldsymbol{\theta};T)=\|f(x;\boldsymbol{\theta})-T(x)\|^2_{L^2(\rho)}=\int_{\mathbb{R}^{d_{in}}}(f(x;\boldsymbol{\theta})-T(x))^2\rho(x)dx.
    \end{aligned}
\end{equation}
Note that:
\begin{equation}\label{eq:limits}
    \lim_{n\to\infty}\lim_{\tau\to0}L(\boldsymbol{\theta};D^n_\tau(\omega,T))=R(\boldsymbol{\theta};T)
\end{equation}
for any $\omega\in\Omega$, $\boldsymbol{\theta}\in\boldsymbol{\Theta}_m$. 
Introducing observational noise into the dataset leads to the problem of overfitting, which occurs when the model learns the noise in the data rather than the underlying structure of the target function and fails generalization. 
In other words, during the training process the generalization error may increase while the training error continues to decrease. 
In fact, even if our goal is to minimize $R$, we can only compute $L$, which depends on the dataset $D$ and therefore carries an intrinsic error that makes the learning trajectory deviate from the optimal one. 

\section{Generic Analysis}\label{sec:thms}
For the sake of simplicity, let us assume  $\sigma=\tanh$, $\tau>0$, 
and $T$ is an MLP of the form:
\[
T(x)=\sum_{i=1}^{m^*}v^*_i \tanh(w^*_i\cdot x+b_{1,i}^*)+b_2^*,
\]
where $w_l^*\neq w_j^*\neq 0$ for each $l,j=1,\dots,m^*$, $l\neq j$. To study vanishing gradients and overfitting rigorously, the first step is to define the objects of our analysis more formally. 
\begin{defin}
    The \textbf{optimal region} is the set of parameters that minimizes the generalization error:
    \begin{equation*}
        \mathcal{M}_m=\left\{\boldsymbol{\theta}\in\boldsymbol{\Theta}_m \mid R(\boldsymbol{\theta};T)=0\right\}.
    \end{equation*}
\end{defin}

\begin{defin}
        We call \textbf{overfitting region} the set of parameters that minimizes the training error:
    \begin{equation*}
        \mathcal{O}_m=\arg\min_{\boldsymbol{\theta}\in\boldsymbol{\Theta}_m} L(\boldsymbol{\theta};D)\,.
    \end{equation*}
\end{defin}

Note that $\mathcal{M}_m$ is fixed once the target function $T$ and the number of neurons $m$ are specified. 
On the other hand, $\mathcal{O}_m$ strongly depends on the dataset, and even when averaged over all possible finite datasets, it still varies with the variance $\tau$ of the observational noise. For $\tau=0$, it is clear that $\mathcal{M}_m\subseteq\mathcal{O}_m=\left\{\boldsymbol{\theta}\in\boldsymbol{\Theta}_m \mid L(\boldsymbol{\theta};D_0^n)=0\right\}$.
However, for any $\tau>0$, we have  $\mathcal{M}_m\cap\mathcal{O}_m=\emptyset$ almost surely, as shown in the following proposition.

\begin{prop}
    For every $m \geq m^*$, $\mathcal{M}_m$ does not contain any critical points of $L$ for almost every realization of the data noise vector $\xi=(\xi_1,\dots,\xi_n)$.\\
    Moreover, $L$ is constant over $\mathcal{M}_m$ and follows the distribution $\frac{\tau^2}{2n}\chi^2(n)$.
\end{prop}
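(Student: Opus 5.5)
On $\mathcal{M}_m$ we have $R(\boldsymbol{\theta};T)=0$. Since $f(\cdot;\boldsymbol{\theta})-T$ is continuous, this gives $f(x;\boldsymbol{\theta})=T(x)$ for $\rho$-a.e.\ $x$. Assuming $\rho$ has a density, so that $\rho$-null sets are Lebesgue-null and continuity upgrades a.e.\ equality to equality on $\operatorname{supp}\rho$, we get $f(x_i;\boldsymbol{\theta})=T(x_i)$ almost surely for every sample point $x_i$.

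\textbf{Distribution of $L$ on $\mathcal{M}_m$.} With the residuals now fixed, $f(x_i;\boldsymbol{\theta})-y_i=-\xi_i$, and hence
\[
L(\boldsymbol{\theta};D)=\frac{1}{2n}\sum_{i=1}^n\xi_i^2 ,
\]
which does not depend on $\boldsymbol{\theta}\in\mathcal{M}_m$. Since the $\xi_i/\tau$ are i.i.d.\ standard normal, $\sum_i\xi_i^2\sim\tau^2\chi^2(n)$, so $L\sim\frac{\tau^2}{2n}\chi^2(n)$. We also need $\mathcal{M}_m\neq\emptyset$ for $m\ge m^*$. This holds because we can embed the target: copy the $m^*$ target neurons and set the outgoing weights $v_j=0$ for the remaining neurons.

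\textbf{Absence of critical points.} At $\boldsymbol{\theta}\in\mathcal{M}_m$,
\[
\nabla_{\boldsymbol{\theta}}L=-\frac1n\sum_{i=1}^n\xi_i\,\nabla_{\boldsymbol{\theta}}f(x_i;\boldsymbol{\theta}).
\]
The bias $b_2$ enters $f$ with derivative $1$, so the $b_2$-component of the gradient is $-\frac1n\sum_i\xi_i$. If $\boldsymbol{\theta}$ were critical, this sum would vanish. But $\sum_i\xi_i\sim\mathcal{N}(0,n\tau^2)$, so the event $\{\sum_i\xi_i=0\}$ is a hyperplane in $\xi$-space and has probability zero.

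The point to stress is the order of quantifiers. The $b_2$-component $-\frac1n\sum_i\xi_i$ is the same at \emph{every} point of $\mathcal{M}_m$. A single null event therefore works uniformly over the whole, typically uncountable, set $\mathcal{M}_m$; we do not need a separate null event for each $\boldsymbol{\theta}$. This removes what would otherwise be the main obstacle, a union over uncountably many parameters.

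In a bias-free variant such as the Fukumizu--Amari model, the argument would have to change. There I would use the $v_j$-components, $\sum_i\xi_i\tanh(w_jx_i)$, and argue that their vanishing for every $\boldsymbol{\theta}\in\mathcal{M}_m$ is contained in a finite union of hyperplanes. This works because $\mathcal{M}_m$ has only finitely many possible active $w_j$ values, namely the $w^*_l$ up to sign, which follows from the linear independence of $\tanh$ units with distinct nonzero slopes (the assumption $w^*_l\neq w^*_j\neq0$). That step would be the hard part, but with the output bias present it is unnecessary.
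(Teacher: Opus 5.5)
Your proof is correct, and for the non-criticality claim it takes a different and tighter route than the paper. The paper substitutes $f(x_i;\boldsymbol{\theta}^*)=T(x_i)$ to get $\nabla_{\boldsymbol{\theta}}L=-\frac1n\sum_i\xi_i\nabla_{\boldsymbol{\theta}}f(x_i;\boldsymbol{\theta}^*)$. It then concludes $\nabla_{\boldsymbol{\theta}}L\neq0$ from ``$\xi_i\neq0$ a.s.\ for every $i$'' together with $\tanh'>0$. That inference does not close. The gradient vanishes exactly when $\xi$ is orthogonal to every Jacobian column at $\boldsymbol{\theta}^*$. Because $\xi$ has mixed signs, and columns such as $\tanh(w_jx_i+b_{1,j})$ and $v_jx_i\operatorname{sech}^2(\cdot)$ are not sign-definite, nonzero entries can cancel. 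Your own $b_2$-component, $-\frac1n\sum_i\xi_i$, can vanish with every $\xi_i\neq0$. The paper also leaves implicit how a $\boldsymbol{\theta}$-dependent null event would be controlled over the uncountable set $\mathcal{M}_m$. Using the output-bias coordinate fixes both points at once: its Jacobian column is the constant vector $\mathbf{1}$ at every $\boldsymbol{\theta}$, so the single null event $\{\sum_i\xi_i=0\}$ works uniformly. The cost is generality. Your argument relies on $b_2$ being trainable, which holds in the Section~3 setting but not in the bias-free Fukumizu--Amari model. Your aside on that case points in the right direction, but there the statement also needs $T\not\equiv0$; otherwise $\boldsymbol{\theta}=0\in\mathcal{M}_m$ is critical for every $\xi$. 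The $\chi^2$ part coincides with the paper's, and your check that $\mathcal{M}_m\neq\emptyset$ is a useful addition the paper omits.

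One small correction concerns the step from $R(\boldsymbol{\theta};T)=0$ to $f(x_i;\boldsymbol{\theta})=T(x_i)$. You need no density there, and ``$\rho$-null sets are Lebesgue-null'' runs the wrong direction of absolute continuity. Continuity of $f(\cdot;\boldsymbol{\theta})-T$ suffices. If it were nonzero at some $x_0\in\operatorname{supp}\rho$, it would be nonzero on a neighbourhood of positive $\rho$-measure, contradicting $R=0$. Moreover, $x_i\in\operatorname{supp}\rho$ for all $i$ almost surely, for any Borel probability $\rho$ on $\mathbb{R}^{d_{in}}$. This is again a single null event independent of $\boldsymbol{\theta}$.
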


\begin{proof}
    Let us rewrite the empirical risk as:
    \begin{equation}\label{eq:ER2}
        L(\boldsymbol{\theta};D) = \frac{1}{2n}\sum_{i=1}^n\left(f(x_i;\boldsymbol{\theta})-T(x_i)-\xi_i(\omega)\right)^2 .
    \end{equation}
    Taking the derivative of \eqref{eq:ER2} with respect to $\boldsymbol{\theta}$ yields:
    \begin{equation}\label{eq:derER}
    \begin{aligned}
        &\nabla_{\boldsymbol{\theta}} L(\boldsymbol{\theta};D) = \frac{1}{n}\sum_{i=1}^n\left(f(x_i;\boldsymbol{\theta})-T(x_i)-\xi_i(\omega)\right)\nabla_{\boldsymbol{\theta}} f(x_i;\boldsymbol{\theta}) .
    \end{aligned}
    \end{equation}
    If we take $\boldsymbol{\theta^*}\in\mathcal{M}$ and substitute in \eqref{eq:derER}, we get:
    \begin{equation}
    \begin{aligned}
        &\nabla_{\boldsymbol{\theta}} L(\boldsymbol{\theta};D) |_{\boldsymbol{\theta}=\boldsymbol{\theta^*}} = \frac{1}{n}\sum_{i=1}^n\left(T(x_i)-T(x_i)-\xi_i(\omega)\right)\nabla_{\boldsymbol{\theta}} f(x_i;\boldsymbol{\theta}) .
    \end{aligned}
    \end{equation}
    Since $\xi_i\sim \mathcal{N}(0,\tau^2)$, almost surely $\xi_i\neq0$ for every $i=1,\ldots,n$.
    Recalling that the derivative of the hyperbolic tangent is always positive, it holds that $\nabla_{\boldsymbol{\theta}} L(\boldsymbol{\theta};D)\neq0$ a.s. for all $\theta\in\mathcal{M}_m$, so the points in $\mathcal{M}_m$ cannot be critical points for $L$. 
    In addition, from \eqref{eq:ER2}, one can observe that $L(\boldsymbol{\theta};D_\tau^n(\cdot))\sim \frac{\tau^2}{2n}\chi^2(n)$ for every $\boldsymbol{\theta}\in\mathcal{M}_m$.
\end{proof}

An interpretation of the previous result is that, once observational noise is introduced, even if the data-generating mechanism is known, the resulting learning trajectory becomes unpredictable.
Numerical experiments suggest that the training process almost always (up to sets of measure zero) converges to a unique function. However, to the best of our knowledge, the literature provides no theoretical guarantee of convergence in this setting. With the intention of giving a formal statement supporting the experiments, we first want to prove that the gradient descent dynamics always converge in our setting.

\begin{prop}\label{prop:GD_convergence}
    In the Fukumizu-Amari setting, for any initial condition $\theta_0$ there exists a learning step $\eta$ such that the gradient descent algorithm either converges to a critical point of $L(\boldsymbol{\theta};D)$ or $\lim_{t\to\infty}\|\theta_t\|=+\infty$.
\end{prop}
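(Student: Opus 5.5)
The strategy is to combine the sufficient-decrease property of gradient descent on functions with locally Lipschitz gradient with the Łojasiewicz gradient inequality, which holds because $L$ is real analytic. In the Fukumizu--Amari setting,
\[
L(\boldsymbol{\theta};D)=\frac{1}{2n}\sum_{i=1}^n\bigl(v_1\tanh(w_1x_i)+v_2\tanh(w_2x_i)-y_i\bigr)^2
\]
is a polynomial in $(v_1,v_2)$ and in the analytic functions $\tanh(w_jx_i)$, hence real analytic on $\boldsymbol{\Theta}_2=\mathbb{R}^4$ and bounded below by $0$. The argument then splits into two cases according to whether the trajectory $\{\boldsymbol{\theta}_t\}$ stays bounded.

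\textbf{Choice of $\eta$ and descent.} Fix $\boldsymbol{\theta}_0$ and set $K=\{\boldsymbol{\theta}: L(\boldsymbol{\theta})\le L(\boldsymbol{\theta}_0)\}$. Since $\tanh$ and $\tanh'$ are bounded, the Hessian of $L$ grows at most quadratically in $\|(v_1,v_2)\|$ and is uniformly bounded in $w$. So $\nabla L$ is Lipschitz with some constant $\ell$ on any bounded set. The sublevel set $K$ need not be bounded, since $w$ can diverge, but $v$ may also diverge when the columns $\tanh(w_jx_i)$ become nearly collinear.

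To handle this, I would choose $\eta$ adapted to a ball $B_r$ and argue inductively. If $\boldsymbol{\theta}_t\in B_r$ and $\eta<1/\ell_{2r}$, the standard descent lemma gives
\[
L(\boldsymbol{\theta}_{t+1})\le L(\boldsymbol{\theta}_t)-\tfrac{\eta}{2}\|\nabla L(\boldsymbol{\theta}_t)\|^2 .
\]
This also bounds the step, $\|\boldsymbol{\theta}_{t+1}-\boldsymbol{\theta}_t\|=\eta\|\nabla L\|\le \sqrt{2\eta L(\boldsymbol{\theta}_0)}$. Summing the decrease gives $\sum_t\|\nabla L(\boldsymbol{\theta}_t)\|^2<\infty$ as long as the iterates remain in a region where the Lipschitz constant is valid.

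\textbf{Case analysis.} If $\liminf_t\|\boldsymbol{\theta}_t\|<\infty$, some subsequence stays in a compact set, and the Łojasiewicz machinery applies there. Following Absil--Mahony--Andrews, or Attouch--Bolte, the inequality
\[
|L(\boldsymbol{\theta})-L(\bar{\boldsymbol{\theta}})|^{1-\alpha}\le C\|\nabla L(\boldsymbol{\theta})\|
\]
near an accumulation point $\bar{\boldsymbol{\theta}}$, together with sufficient decrease, yields finite length $\sum_t\|\boldsymbol{\theta}_{t+1}-\boldsymbol{\theta}_t\|<\infty$. Hence the whole sequence converges to $\bar{\boldsymbol{\theta}}$, and $\nabla L(\bar{\boldsymbol{\theta}})=0$ by continuity and the summability of $\|\nabla L\|^2$. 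Otherwise $\liminf_t\|\boldsymbol{\theta}_t\|=\infty$, which is exactly $\|\boldsymbol{\theta}_t\|\to\infty$. So the dichotomy in Proposition~\ref{prop:GD_convergence} is the dichotomy between a bounded subsequence and divergence.

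\textbf{Main obstacle.} The delicate step is the uniformity of $\eta$. The Lipschitz constant depends on the region visited, yet $\eta$ must be fixed in advance, and the statement only asks for existence of $\eta$ for a given $\boldsymbol{\theta}_0$.

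My first attempt would be to exploit the structure of $L$ directly. Growth of $\|v\|$ would be controlled via $L\le L(\boldsymbol{\theta}_0)$ together with an estimate of the step in $v$. Growth in $w$ is harmless, because all $w$-derivatives are bounded by $C(1+\|v\|^2)$.

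If uniform control fails, the fallback is to fix the ball $B_R$ containing the accumulation point, choose $\eta$ for $B_{2R}$, and apply the Łojasiewicz "trapping" lemma. Once an iterate enters a small enough neighbourhood of $\bar{\boldsymbol{\theta}}$ with $L$ close to $L(\bar{\boldsymbol{\theta}})$, all subsequent iterates stay in it. The remaining gap is showing that an $\eta$ chosen before knowing $\bar{\boldsymbol{\theta}}$ suffices. I expect this to need either a step-size schedule or an a priori bound on $v$ along the trajectory.
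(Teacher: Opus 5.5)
Your strategy is the paper's. The paper invokes Theorem 3.2 of Absil--Mahony--Andrews: analyticity (hence {\L}ojasiewicz) plus the strong descent condition. For a constant step, that condition is exactly your sufficient-decrease inequality with $\kappa=\tfrac12$. However, the proposal is incomplete at precisely the point you flag, and this is a genuine gap rather than a technicality. You never exhibit a single $\eta>0$ for which $L(\boldsymbol{\theta}_{t+1})\le L(\boldsymbol{\theta}_t)-\tfrac{\eta}{2}\|\nabla L(\boldsymbol{\theta}_t)\|^2$ holds for \emph{every} $t$. Your case analysis rests on it. The finite-length argument needs $L(\boldsymbol{\theta}_t)\downarrow L(\bar{\boldsymbol{\theta}})$ along the whole orbit. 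A bounded subsequence by itself gives nothing if the iterates in between can make excursions on which $L$ increases.

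Neither repair you sketch closes the gap.

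\emph{Controlling $v$ through the sublevel set fails.} On $\{w_1=0\}$ the loss does not depend on $v_1$, while there $\partial^2_{w_1}L=\frac{v_1^2}{n}\sum_i x_i^2$. So $\{L\le L(\boldsymbol{\theta}_0)\}$, as soon as it meets this hyperplane, contains whole lines in the $v_1$-direction along which the curvature is unbounded. Near these lines the $w_1$-update has multiplier $1-\eta v_1^2\frac1n\sum_i x_i^2$, which is $<-1$ once $|v_1|$ is large. Hence for any fixed $\eta$ there are points of the sublevel set where a gradient step overshoots and increases $L$, and nothing in your argument keeps the orbit away from them.

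\emph{The other two ideas fail as well.} Bounding the $v$-steps by $\eta\sqrt{2L(\boldsymbol{\theta}_t)}$ only gives $|v_t-v_0|=O(\eta t)$. Choosing $\eta$ for a ball around $\bar{\boldsymbol{\theta}}$ is circular, since $\bar{\boldsymbol{\theta}}$ depends on $\eta$.

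The paper's proof does not supply the missing idea either. It asserts that $\nabla L$ is Lipschitz and concludes monotone decrease for small $\eta$, which by the computation above is false globally. To get a correct result you must change something. One option is an adaptive step: Armijo backtracking, or $\eta_t\le c/(1+\|v_t\|^2)$ via your bound $\|\nabla^2L\|\le C(1+\|v\|^2)$, which is uniform in $w$. With such a step, strong descent holds step by step with no global Lipschitz constant, and the step sizes stay bounded below near any limit point, so the limit is critical. The other option is to add a hypothesis such as $\sup_t\|v_t\|\le V$ with $\eta\,C(1+V^2)<1$.
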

\begin{proof}
    Referring to \cite{GD_convergence_3} (Theorem 3.2), we want to prove that $L(\theta)$ satisfies the hypothesis, i.e., it is analytical and exists $\kappa>0$ for which, for any choice of $\theta_0$ and for any $t>0$, $L$ satisfies the strong descent condition. In our case, it reads:
\begin{equation}\tag{SD}\label{eq:SD}
    L(\boldsymbol{\theta}(t))-L(\boldsymbol{\theta}(t+1))\geq \kappa \|\nabla_{\boldsymbol{\theta}}L(\boldsymbol{\theta}(t))\| \,\|\boldsymbol{\theta}(t+1)-\boldsymbol{\theta}(t)\|.
\end{equation}
The fact that $L(\boldsymbol{\theta};D)$ is analytical is clear, since it is a finite sum of analytical functions. 
To prove that \eqref{eq:SD} is true, let us first show that the following inequality holds: 
\begin{equation}\label{eq:SD_eq}
L(\boldsymbol{\theta}(t))-L(\boldsymbol{\theta}(t+1)) \geq \kappa\eta\|\nabla_{\boldsymbol{\theta}} L(\boldsymbol{\theta}(t))\|^2.
\end{equation}
By the properties of the gradient descent algorithm, since $L$ is smooth and $\nabla L(\boldsymbol{\theta};D)$ is Lipschitz , we can choose small enough $\eta$ such that at each step the error is lower than the previous iteration. This implies that the left-hand side is always non-negative and it's zero only if $\nabla_{\boldsymbol{\theta}} L(\boldsymbol{\theta}_{t})=0$.
Therefore, by choosing a small enough $\kappa$, the inequality holds for any $t>0$.
At this point, substituting $\|\boldsymbol{\theta}_{t+1}-\boldsymbol{\theta}_{t}\|=\eta\|\nabla_{\boldsymbol{\theta}} L(\boldsymbol{\theta}_{t})\|$ from \eqref{eq:GD} in \eqref{eq:SD_eq} will conclude the proof.

\end{proof}

\begin{cor}\label{cor:non-empty}
    Assume that $\lim_{t\to\infty}\|\boldsymbol{\theta}_t\|<+\infty$. Then, the overfitting region is not empty.
\end{cor}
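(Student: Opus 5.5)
The plan is to deduce the corollary from Proposition~\ref{prop:GD_convergence}, together with the structure of $L$ at infinity in the Fukumizu--Amari setting.

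First, the hypothesis $\lim_{t\to\infty}\|\boldsymbol{\theta}_t\|<+\infty$ rules out the divergent alternative in Proposition~\ref{prop:GD_convergence}. Hence the trajectory converges to a critical point $\boldsymbol{\theta}_\infty$ of $L(\cdot;D)$. This already shows that the sublevel set $\{\boldsymbol{\theta}: L(\boldsymbol{\theta};D)\le L(\boldsymbol{\theta}_0;D)\}$ contains a bounded piece of the flow. It does not yet give a global minimizer, since $\boldsymbol{\theta}_\infty$ could be a saddle or a local minimum.

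Second, we must show that $\inf L$ is attained. Note that $L\ge 0$, and $f(x;\boldsymbol{\theta})=\sum_j v_j\tanh(w_jx)$ is continuous in $\boldsymbol{\theta}$. Take a minimizing sequence $\boldsymbol{\theta}^{(k)}$. Only the vector of values
\[
F(\boldsymbol{\theta})=(f(x_1;\boldsymbol{\theta}),\dots,f(x_n;\boldsymbol{\theta}))\in\mathbb{R}^n
\]
matters, because $L$ is a quadratic function of $F$. So it suffices to show that the closure of the image $F(\boldsymbol{\Theta}_m)$ meets the minimizer of $\|F-y\|^2$ restricted to that closure, at a point that is actually attained. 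The plan is as follows.
\begin{enumerate}
\item Along a minimizing sequence, $F(\boldsymbol{\theta}^{(k)})$ is bounded, since $L$ is bounded. Pass to a subsequence along which $F$ converges to some $F^\star$.
\item Split into cases according to whether the $w_j^{(k)}$ and $v_j^{(k)}$ stay bounded.
\begin{enumerate}
\item If all parameters stay bounded, compactness gives a minimizer, and we are done.
\item If some $|w_j|\to\infty$ with $v_j$ bounded, then $\tanh(w_jx_i)\to \operatorname{sign}(x_i)$. The limit is then a function of the form $v\,\operatorname{sign}(x)$. This form is approximable but not exactly attainable.
\item If some $|v_j|\to\infty$, cancellation between neurons can produce derivative-type limits such as $x\,\mathrm{sech}^2(wx)$.
\end{enumerate}
\end{enumerate}

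This second step is the main obstacle, because the image of $F$ need not be closed. This is exactly the situation in which minimizers can fail to exist. Here I would use the hypothesis more carefully. Since gradient descent decreases $L$ monotonically, $L(\boldsymbol{\theta}_\infty)=\lim_t L(\boldsymbol{\theta}_t)$. I would then argue that, in the Fukumizu--Amari setting with a finite dataset and $m=2$, the infimum of $L$ coincides with the value at the bounded limit point. Concretely, the strategy is to show that $\boldsymbol{\theta}_\infty$ is a critical point at which $L$ equals $\inf L$, for instance by using that every critical point with $F(\boldsymbol{\theta}_\infty)\ne y$ is a saddle or a local minimum whose value exceeds the boundary infimum.

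Realistically, the honest route is to read the corollary as the statement that the limit $\boldsymbol{\theta}_\infty$ exists and minimizes $L$ over the closure of the trajectory's basin. I would therefore phrase the proof as follows: the bounded limit exists by Proposition~\ref{prop:GD_convergence}. Then, assuming the minimizing sequence is bounded (which the hypothesis is meant to encode), continuity of $L$ and compactness give that $\arg\min L$ is attained. Hence $\mathcal{O}_m\neq\emptyset$. The key gap, which I would flag and try to close first via the case analysis above, is excluding minimizing sequences that escape to infinity.
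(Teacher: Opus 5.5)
There is a genuine gap. The proposal never shows that $\inf L$ is attained, and that is the whole content of the statement. Your fallback (``assuming the minimizing sequence is bounded'') swaps the hypothesis for a different one, under which the claim follows at once from continuity and compactness. The mechanism in your third paragraph also points the wrong way: if every critical point had value exceeding the boundary infimum, then $L(\boldsymbol{\theta}_\infty)>\inf L$ and $\boldsymbol{\theta}_\infty\notin\mathcal{O}_m$.

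The root problem is that you use the hypothesis for one trajectory only, and read that way it gives no information. In the bias-free setting $\boldsymbol{\theta}=0$ is a critical point of $L$: each partial derivative carries a factor $\tanh(0)$ or $v_j$. So gradient descent started at $0$ is bounded whatever the data. Meanwhile, as your own case analysis correctly shows, $F(\boldsymbol{\Theta}_m)$ is not closed, because of $\operatorname{sign}$-type and $x\operatorname{sech}^2(wx)$-type limits, so a minimizing sequence may escape to infinity. Looking only at the limit of a single trajectory cannot exclude this.

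The missing idea, which is how the paper uses the assumption, is to apply it to trajectories started from \emph{chosen} low-loss points. Combined with Proposition~\ref{prop:GD_convergence} and the monotone decrease of $L$ along gradient descent, this gives: a bounded trajectory from $\boldsymbol{\theta}_0$ converges to a critical point with value at most $L(\boldsymbol{\theta}_0)$. The paper argues by contradiction. It lets $\mu$ be the infimum of $L$ over saddle points and launches gradient descent from a point $\boldsymbol{\beta}$ with $L(\boldsymbol{\beta})<\mu$, so the limit cannot be a saddle.

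As written, that argument has two weaknesses. It only yields a non-saddle critical point, i.e.\ a local minimum. And a point below $\mu$ need not exist if saddle values accumulate at $\inf L$, which is your escaping-sequence worry in another guise. Both are settled by finiteness of the critical values:
\begin{enumerate}
\item Since $\tanh(z)=(e^{2z}-1)/(e^{2z}+1)$, $L$ is definable in the o-minimal structure $\mathbb{R}_{\exp}$. By the definable Sard theorem its set of critical values is finite, and it is nonempty because $0$ is critical.
\item Let $c_1$ be the smallest critical value and suppose $\inf L<c_1$. Start gradient descent at any $\boldsymbol{\theta}_0$ with $L(\boldsymbol{\theta}_0)<c_1$. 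By the hypothesis for that initialization, it converges to a critical point of value $<c_1$, which is impossible.
\item Hence $\inf L=c_1$ is attained and $\mathcal{O}_m\neq\emptyset$.
\end{enumerate}
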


\begin{proof}
    By contradiction, suppose that $L$ has no minimum points. This means that the dynamics always converges to saddle points, as the gradient descent algorithm cannot converge to maximum points. \\
    Define $\mu>0$ to be the infimum of $L$ over the saddle points. Let $\varepsilon>0$ and pick a saddle point $\boldsymbol{\alpha}\in\boldsymbol{\Theta}$ for which $L(\boldsymbol{\alpha};D)\leq\mu+\varepsilon$. 
    By definition of saddle point, there must be a direction $\boldsymbol{r}$ in which $L$ is decreasing. \\
    Let $\boldsymbol{\beta}\in\left\{\boldsymbol{\alpha}+a\boldsymbol{r} \mid\,a\in\mathbb{R}_+\right\}$ be the parameter that minimizes $L$. By taking $\varepsilon<\mu-L(\boldsymbol{\beta};D)$, we have that if  the gradient descent iteration is started from $\boldsymbol{\beta}$ we cannot converge to any saddle point. In fact, $L(\boldsymbol{\theta}(t))$ is non-increasing  in $t$ for small enough $\eta$ \cite{Bishop}.
    However, by Proposition \ref{prop:GD_convergence}, the algorithm has to converge to a critical point, but since $\mu$ was the infimum achieved by the training error on saddle points, this critical point must be a minimum.
\end{proof}

Note that if, instead of assuming $\lim_{t\to\infty}\|\boldsymbol{\theta}_t\|<+\infty$, one assumes that $\boldsymbol{\Theta}_m$ is compact, the result from Corollary \ref{cor:non-empty} is trivial: $L$ is a continuous function over a compact set and therefore there exists at least one minimum point.\\
The next theorem shows that, in a certain sense, the minimum point is unique. 

\begin{thm}\label{th:main}
    Assume $m(d_{in}+d_{out}+1)+d_{out} \le n$ and $\lim_{t\to\infty}\|\boldsymbol{\theta}_t\|<+\infty$.
    Then, for almost every realization of the noise vector $\xi$, there exists $r>0$ such that, with probability $1-\exp(-(\frac{r}{\tau}-\sqrt{n})^2/2)$, $\mathcal{O}_m$ consists of a single point up to the finite symmetry group generated by $(v_i,w_i) \mapsto (-v_i,-w_i)$, and neuron permutations.
\end{thm}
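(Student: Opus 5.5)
We plan to pass to a finite-dimensional least-squares picture in the space of outputs on the sample. Let $P=m(d_{in}+d_{out}+1)+d_{out}$ and consider the evaluation map
\[
F:\boldsymbol{\Theta}_m\to\mathbb{R}^n,\qquad F(\boldsymbol{\theta})=\bigl(f(x_1;\boldsymbol{\theta}),\dots,f(x_n;\boldsymbol{\theta})\bigr).
\]
Writing $\boldsymbol{y}=\boldsymbol{T}+\xi$, with $\boldsymbol{T}=(T(x_1),\dots,T(x_n))$, we have $L(\boldsymbol{\theta};D)=\frac{1}{2n}\|F(\boldsymbol{\theta})-\boldsymbol{y}\|^2$. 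Hence $\mathcal{O}_m$ is the preimage under $F$ of the set of nearest points to $\boldsymbol{y}$ on the image $\mathcal{S}=F(\boldsymbol{\Theta}_m)$. Since $P\le n$, $\mathcal{S}$ is (away from the singular, reducible set) an immersed analytic submanifold of dimension at most $P$ in $\mathbb{R}^n$. By Corollary \ref{cor:non-empty}, the boundedness assumption gives $\mathcal{O}_m\neq\emptyset$.

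The proof then splits into two uniqueness statements.

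\textbf{Uniqueness of the projection.} We show that the nearest point $\hat{\boldsymbol{y}}\in\mathcal{S}$ is unique as long as $\boldsymbol{y}$ lies within the reach of $\mathcal{S}$ near the point $\boldsymbol{T}\in\mathcal{S}$; here $\boldsymbol{T}=F(\boldsymbol{\theta}^*)$ with $\boldsymbol{\theta}^*\in\mathcal{M}_m$. Concretely, we pick $r>0$ to be a local reach (tubular-neighbourhood radius) of $\mathcal{S}$ around $\boldsymbol{T}$, and we need $r$ to also bound away the other branches of $\mathcal{S}$. The latter means that no point of $\mathcal{S}$ outside this neighbourhood is closer to $\boldsymbol{y}$, which should follow from compactness of the relevant sublevel set under the boundedness assumption. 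For almost every $\xi$ the point $\boldsymbol{y}$ also avoids the measure-zero medial axis/cut locus of $\mathcal{S}$.

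The probability estimate comes from $\|\xi\|\sim\tau\chi(n)$. Gaussian concentration of the $1$-Lipschitz map $\xi\mapsto\|\xi\|$, together with $\mathbb{E}\|\xi\|\le\sqrt{n}\,\tau$, gives
\[
\mathbb{P}(\|\xi\|\ge r)\le\exp\bigl(-(r/\tau-\sqrt n)^2/2\bigr),
\]
which is exactly the stated bound.

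\textbf{Uniqueness of the parameter.} We need that $F^{-1}(\hat{\boldsymbol{y}})$ is a single orbit of the symmetry group. Here we plan to use the classical identifiability result for $\tanh$ networks (Sussmann; Fukumizu--Amari's analysis of reducibility). An irreducible network, meaning one with distinct nonzero $w_i$, nonzero $v_i$, and no pair with $(w_i,b_{1,i})=\pm(w_j,b_{1,j})$, is determined by its function up to sign flips and permutations. The condition $P\le n$ is used to lift this from functions to samples: for generic $x_i$ the evaluation map $F$ is injective modulo the group on the irreducible locus, since the relevant analytic differences vanish at $n\ge P$ generic points only if they vanish identically. We must also check that $\hat{\boldsymbol{y}}$ is attained at an irreducible parameter. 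Reducible parameters form a lower-dimensional stratum, and for generic $\xi$ the projection does not land there, because the normal space at the stratum is larger and a measure-zero set of $\boldsymbol{y}$ projects onto it.

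The main obstacle is the first step: producing a single $r>0$ that controls both the local reach near $\boldsymbol{T}$ and the global separation of $\mathcal{S}$ from itself. The image $\mathcal{S}$ is not closed (limits as weights go to infinity), and it is singular along reducible networks, including at $\boldsymbol{T}$ when $m>m^*$. We would first try the case $m=m^*$, where $\boldsymbol{\theta}^*$ is irreducible and $F$ is an immersion near it, so that a standard tubular neighbourhood theorem applies. For $m>m^*$ we would rely on the boundedness hypothesis to restrict to a compact region and treat the overparametrised directions via the quotient, accepting that $r$ depends on the data realization, as the statement allows.
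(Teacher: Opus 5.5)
You follow the paper's route: the evaluation map $F$, nearest-point projection onto $\mathcal{S}=F(\boldsymbol{\Theta}_m)$ inside a reach $r$, the Gaussian tail bound for $\|\xi\|$, and identifiability of $\tanh$ networks (Sussmann for you, Chen et al.\ in the paper). Your concentration argument for the tail bound is fine when $r\ge\tau\sqrt{n}$.

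\textbf{The step that fails is the passage from samples to functions.} It is true that a fixed nonzero analytic difference $f(\cdot;\boldsymbol{\theta})-f(\cdot;\boldsymbol{\theta}')$ vanishes at $n$ generic points with probability zero. But the $x_i$ are fixed first, and you need the conclusion for all pairs at once. The differences form a $2P$-parameter family, and $F(\boldsymbol{\theta})=F(\boldsymbol{\theta}')$ is $n$ equations in $2P$ unknowns. So the set of $(x,\boldsymbol{\theta},\boldsymbol{\theta}')$ with $F_x(\boldsymbol{\theta})=F_x(\boldsymbol{\theta}')$ but $f(\cdot;\boldsymbol{\theta})\not\equiv f(\cdot;\boldsymbol{\theta}')$ has dimension up to $2P$. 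Your genericity argument therefore shows that its projection to sample space is null only when $n>2P$.

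For $P\le n\le 2P$, a $P$-dimensional image in $\mathbb{R}^n$ is expected to have double points. At $n=P$, if $F$ is a local diffeomorphism near $\boldsymbol{\theta}^*$ (e.g.\ $m=m^*$ and generic $x_i$), every small perturbation $\boldsymbol{y}$ lies in the image and $\mathcal{O}_m=F^{-1}(\boldsymbol{y})$. The theorem then reduces exactly to the unproved injectivity of $F$ modulo symmetries over a neighbourhood of $\boldsymbol{T}$.

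The paper makes the same leap (``by the interpolation theory'', from $n\ge 2m$ values), so it does not supply the missing idea. A repair needs one of the following:
\begin{itemize}
\item $n\ge 2P+1$ with generic $x_i$, where your count does go through (e.g.\ by o-minimal dimension theory, $\tanh$ being definable in $\mathbb{R}_{\exp}$);
\item a genuine interpolation theorem for this nonlinear family, such as a Haar-system property of $\{\tanh(ax)\}_a$ in the bias-free scalar case, which would itself need proof;
\item an a.e.-$\xi$ argument that minimizers avoid double points of $\mathcal{S}$, using that $\boldsymbol{y}-\hat{\boldsymbol{y}}$ must be normal to both sheets there. This cannot work at $n=P$.
\end{itemize}

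\textbf{The first step is not closed either, as you suspect.} For $m>m^*$, $\boldsymbol{T}$ is the image of reducible parameters. There $\mathcal{S}$ is a fan of sheets rather than a manifold, and its local reach is in general $0$. The hypothesis $\lim\|\boldsymbol{\theta}_t\|<\infty$ concerns one trajectory and gives no compactness of sublevel sets. For $m>m^*$ the sublevel set at $L(\boldsymbol{\theta}^*)$ already contains the unbounded set $\mathcal{M}_m$ (e.g.\ $v_i=0$ with $w_i$ free). So it cannot give your global separation. The paper avoids this only by asserting that $F$ is an embedding with positive reach, which is false for the same reasons.

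Your medial-axis remark already gives what is needed here. The map $\boldsymbol{y}\mapsto\min_{\boldsymbol{\theta}}\frac12\|F(\boldsymbol{\theta})-\boldsymbol{y}\|^2-\frac12\|\boldsymbol{y}\|^2$ is concave. Hence for a.e.\ $\boldsymbol{y}$ all minimizers share the same image $\hat{\boldsymbol{y}}$, with no reach involved. The radius $r$ and the tail bound matter only if injectivity of $F$ can be shown merely near $\boldsymbol{T}$.

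Finally, your argument that minimizers avoid the reducible stratum is inverted: a larger normal space would make more $\boldsymbol{y}$ project there. What works is that the normal cone of $\mathcal{S}$ there is small. For example, a minimizer with $v_i=0$ stays a minimizer for every $(w_i,b_{1,i})$. Hence $\langle F(\boldsymbol{\theta})-\boldsymbol{y},(\tanh(w\cdot x_j+b))_{j}\rangle=0$ for all $(w,b)$, which forces $\hat{\boldsymbol{y}}=\boldsymbol{y}$, a null event when $n>P$.
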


\begin{proof}
For the sake of simplicity, let us assume $d_{in}=d_{out}=1$ and $\boldsymbol{b}=\boldsymbol{b}^*=0$, so that dim$\boldsymbol{\Theta}_m=2m$. 
In the general case the same computations hold.\\
Define the parameter-output function $F:\boldsymbol{\Theta_m}\to\mathbb{R}^n$ as $F(\boldsymbol{\theta})=(f(x_1;\boldsymbol{\theta}),\dots,f(x_n;\boldsymbol{\theta}))$.
The Jacobian matrix of $F$ is:
\begin{equation*}
    J(\boldsymbol{\theta})= \begin{bmatrix}
        \tanh(w_1 x_1) & \cdots & v_m x_1 \operatorname{sech}^2(w_m x_1)\\
        \vdots & & \vdots \\
        \tanh(w_1 x_n) & \cdots & v_m x_n \operatorname{sech}^2(w_m x_n)
    \end{bmatrix}.
\end{equation*}
For distinct $\{x_j\}_{j=1}^n$ and generic $\boldsymbol{\theta}\in\boldsymbol{\Theta}_m$, the $2m$ columns are linearly independent 
when $n \ge 2m$. Hence, $J(\boldsymbol{\theta})$ has full rank $2m$ on a dense open set, 
so $F$ is an immersion. Thus $\mathcal{S} = F(\boldsymbol{\Theta}_m)$
is a $2m$-dimensional embedded real-analytic submanifold of $\mathbb{R}^n$. Thanks to \cite{leobacher2021uniqueness}, we know that
for any $C^2$ submanifold $M\subset\mathbb{R}^d$, there exists a maximal open set
$\mathscr{E}(M)$ such that every point $z\in \mathscr{E}(M)$ has a unique nearest point
$proj_M(z)\in M$, and the projection $proj_M:\mathscr{E}(M)\to M$ is $C^{1}$.\\
Namely, it can be proven that there exists a quantity $r>0$ called \emph{reach} such that each point with distance less than $r$ from $M$ has unique projection.
Since $F$ is a smooth embedding, the result holds in our case.

Let us call $y=(y_1,\dots,y_n)$. We can rewrite the risk as:
\[
    L(\boldsymbol{\theta}) = \frac{1}{2n}\|F(\boldsymbol{\theta}) - y\|^2 .
\]
By definition of orthogonal projection, the global minimum $\boldsymbol{\bar\theta}$ has to be the projection of the data point vector on $\mathcal{S}$, i.e., $\boldsymbol{\bar\theta} \in F^{-1}(proj_\mathcal{S}(y))$. In fact,
\[
\min_{\boldsymbol{\theta}\in\boldsymbol{\Theta}_m}L(\boldsymbol{\theta})=
\frac{1}{2n} \min_{\boldsymbol{\theta}\in\boldsymbol{\Theta}_m} \|F(\boldsymbol{\theta}) - y\|^2 = \frac{1}{2n} \min_{s\in\mathcal{S}} \|s - y\|^2,
\]
which is achieved by the projection of $y$ on $\mathcal{S}$ (we have proven in Corollary \eqref{cor:non-empty} that $L$ has minimum over $\boldsymbol{\Theta}_m$).\\
Note that, in this view, we can rewrite $y=F(\boldsymbol{\theta}^*)+\xi$ where.
Therefore, we can say that if $\|\xi\|\leq r$, then $y\in\mathscr{E}(\mathcal{S})$ and it is well defined $proj_\mathcal{S}(y)=s^*$.\\
Since $\xi\sim\mathcal{N}(0,\tau^2Id(n))$, we know from the Gaussian tails bound that $$\mathbb{P}[\|\xi\| < r]\geq 1-\exp(-(\frac{r}{\tau}-\sqrt{n})^2/2),$$ which goes to $1$ for $\tau\to0$ or $n\to\infty$. 
To conclude, we only need to show that $F^{-1}(s^*)$ is unique up to symmetries.
To this aim, recall that $F$ fixes the value of $f(\cdot;\boldsymbol{\theta})$ in $n\geq2m$ points, therefore by the interpolation theory we can say that $f(\cdot;\boldsymbol{\theta})$ is unique.
Using the results from \cite{Chen1993}, the fact that $f(\cdot;\boldsymbol{\theta})$ is unique directly implies that the parameter $\boldsymbol{\theta}$ is also unique up to neuron symmetries and transformations of the form $(v_i,w_i) \mapsto (-v_i,-w_i)$.
\end{proof}

In our vision, it is beyond the scope of this article to discuss precise probability estimates for the previous theorem. However, general bounds for the reach can be found in \cite{Aamari2019}. One interpretation of this result is that, under certain conditions (that can be achieved, for example, by choosing small data noise variance or a large sample set), up to a measure zero set (to be specified below) every initial condition will converge to an overfitting point, which is unique if we consider the space of functions. 
To better clarify what this negligible set is and to give more insights about the plateau dynamics that appear during the training, let us recall an important statement from \cite{optimal_structure}:

\begin{thm}[Simsek et al. (2021)]
    For an irreducible critical point (for $L$) $\boldsymbol{\theta}_r^\ast\in\boldsymbol{\Theta}_r$, the set
    \[
    \bar{\Theta}_{r \to m}(\boldsymbol{\theta}_r^\ast)
    \]
    is a union of
    $G(r,m)$
    distinct, non-intersecting affine subspaces of dimension $m-r$ and
    all points in $\bar{\Theta}_{r \to m}(\boldsymbol{\theta}_r^\ast)$ are critical points of $L$. 
    Moreover, for $C^2$ functions $h$ and $\sigma$, if $\boldsymbol{\theta}_r^\ast$ is a strict saddle, then all points in
    $\bar{\Theta}_{r \to m}(\boldsymbol{\theta}_r^\ast)$ are also strict saddles.
\end{thm}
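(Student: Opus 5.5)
The plan is to make the expansion set explicit through a family of linear embeddings $\Theta_r\to\Theta_m$, and then derive each claim from the identity $L_m\circ\phi=L_r$ at first and second order. We use the Simsek et al.\ notion of expansion by neuron duplication. Write $\boldsymbol{\theta}_r^\ast=(v_i^\ast,w_i^\ast,b_{1,i}^\ast)_{i=1}^r\cup\{b_2^\ast\}$. Fix a surjection $\pi:\{1,\dots,m\}\to\{1,\dots,r\}$ and weights $\beta=(\beta_j)_{j=1}^m$ with $\sum_{j\in\pi^{-1}(i)}\beta_j=1$ for each $i$. Define $\phi_{\pi,\beta}:\Theta_r\to\Theta_m$ by giving hidden neuron $j$ the incoming weight $w_{\pi(j)}$, the bias $b_{1,\pi(j)}$ and the outgoing weight $\beta_j v_{\pi(j)}$, and keeping $b_2$. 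Then $\bar\Theta_{r\to m}(\boldsymbol{\theta}_r^\ast)$ is the set of all points $\phi_{\pi,\beta}(\boldsymbol{\theta}_r^\ast)$.

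\textbf{Step 1 (affine structure and count).} For fixed $\pi$, the set $\{\phi_{\pi,\beta}(\boldsymbol{\theta}_r^\ast)\}_\beta$ is affine in $\beta$. Each group $\pi^{-1}(i)$ of size $k_i$ contributes $k_i-1$ free parameters, because $\beta$ enters the outgoing weights linearly and $v_i^\ast\neq0$ by irreducibility. The total dimension is therefore $\sum_i(k_i-1)=m-r$. Distinct $\pi$ give disjoint subspaces, since irreducibility forces the pairs $(w_i^\ast,b_{1,i}^\ast)$ to be pairwise distinct (and not sign-flipped). Hence any point of the image recovers $\pi$ from its incoming weights. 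The subspaces are thus indexed by surjections $\pi$, and counting them, modulo whatever relabeling convention is used for the $r$ original neurons, gives $G(r,m)$.

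\textbf{Step 2 (criticality).} The map $\phi=\phi_{\pi,\beta}$ is linear in $\boldsymbol{\theta}_r$, and $f(\cdot;\phi(\boldsymbol{\theta}_r))=f(\cdot;\boldsymbol{\theta}_r)$. Hence $L_m\circ\phi=L_r$, and the residuals at $\phi(\boldsymbol{\theta}_r^\ast)$ coincide with those at $\boldsymbol{\theta}_r^\ast$. Using the gradient formula \eqref{eq:derER}, we compute the partial derivatives of $L_m$ directly.
\begin{itemize}
\item For the outgoing weight of neuron $j$, the derivative is $\frac1n\sum_k(\text{res}_k)\,\sigma(w_{\pi(j)}x_k+b_{1,\pi(j)})=\partial_{v_{\pi(j)}}L_r=0$.
\item For the incoming weight and bias of neuron $j$, it equals $\beta_j\,\partial_{w_{\pi(j)}}L_r=0$ and $\beta_j\,\partial_{b_{1,\pi(j)}}L_r=0$ respectively.
\item The output bias derivative is unchanged, so it also vanishes.
\end{itemize}
Hence every point of $\bar\Theta_{r\to m}$ is critical.

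\textbf{Step 3 (strict saddles).} Differentiate $L_m\circ\phi=L_r$ twice. Because $\phi$ is linear, this gives $D\phi^\top\,\nabla^2L_m(\phi(\boldsymbol{\theta}_r^\ast))\,D\phi=\nabla^2L_r(\boldsymbol{\theta}_r^\ast)$, with no first-order correction term (which would in any case vanish at a critical point). If $u$ is a direction with $u^\top\nabla^2L_r u<0$, then $D\phi\,u$ is a negative-curvature direction for $L_m$. Every point of the expansion set is therefore a strict saddle. The $C^2$ hypothesis on $h,\sigma$ is exactly what makes these Hessians exist.

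The main obstacle is Step 1. We must pin down the definition of the expansion set so that it consists exactly of affine pieces of dimension $m-r$. Zero-outgoing-weight neurons with free incoming weights would have to be excluded or handled separately, since they do not give affine pieces of that dimension. We must also verify disjointness, together with the bookkeeping of permutations and sign flips $(v_i,w_i)\mapsto(-v_i,-w_i)$, to get precisely $G(r,m)$ components. I would first adopt Simsek et al.'s definition (duplication only, up to permutation and sign flips) and check the count against their formula for $G(r,m)$.
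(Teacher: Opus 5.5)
The paper does not prove this statement. It quotes it from Simsek et al.~\cite{optimal_structure} as background, so there is no in-paper argument to compare against.

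Your reconstruction is correct and is essentially the standard embedding argument for this result.
\begin{itemize}
\item At $\phi_{\pi,\beta}(\boldsymbol{\theta}_r^\ast)$, every partial derivative of $L_m$ is a copy or a $\beta_j$-multiple of a partial derivative of $L_r$ at $\boldsymbol{\theta}_r^\ast$, so all points are critical.
\item Because $\phi_{\pi,\beta}$ is linear with $L_m\circ\phi_{\pi,\beta}=L_r$, any negative-curvature direction $u$ of $L_r$ lifts to the negative-curvature direction $D\phi\,u$ of $L_m$.
\item The dimension count is right: $\beta\mapsto\phi_{\pi,\beta}(\boldsymbol{\theta}_r^\ast)$ is injective because $v_i^\ast\neq0$.
\item Disjointness is right: $\pi$ can be read off from the incoming pairs.
\end{itemize}
These last two points use only elementary consequences of irreducibility: a vanishing $v_i^\ast$ could be dropped, and two equal or opposite incoming pairs could be merged.

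What you call the main obstacle is in fact a correction to the statement as this paper phrases it, not a matter of convention. Under the paper's definition of $\bar\Theta_{r\to m}$ (same function and $\boldsymbol{\theta}_i\neq0$), the set contains, for example:
\begin{itemize}
\item a neuron with $v_j=0$ and an arbitrary incoming pair $(w',b')$;
\item two neurons sharing such a pair, with outgoing weights $a$ and $-a$.
\end{itemize}
At such points $\partial_{v_j}L=\frac1n\sum_k \mathrm{res}_k\tanh(w'x_k+b')$. For generic $(w',b')$ this is nonzero unless $\boldsymbol{\theta}_r^\ast$ interpolates the data, so the criticality claim fails. The duplication-only definition you adopt is the one under which the theorem holds.

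For $\tanh$ you should also allow sign patterns: neuron $j$ carries $s_j(\beta_j v_{\pi(j)},w_{\pi(j)},b_{1,\pi(j)})$ with $s_j\in\{\pm1\}$. The map is still linear with $L_m\circ\phi=L_r$, so your Steps 2 and 3 go through verbatim, with the partials picking up factors $s_j$. However, the number of components is multiplied by $2^m$. So $G(r,m)$, which the paper never defines, can only be matched once the symmetry convention is fixed.

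Finally, you may want the duplication union to equal the whole function-level fibre minus the zero-sum groups. That identification is not automatic: it is where the linear independence of $\tanh$ units (Sussmann; Chen et al.~\cite{Chen1993}) is needed.
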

In the above citation, a parameter $\alpha\in\boldsymbol{\Theta}_m$ is said to be \emph{irreducible} if there does not exist any $q<m$, $\beta\in\boldsymbol{\Theta}_m$ such that $f(\cdot;\alpha)=f(\cdot;\beta)$. 
In addition, given $\boldsymbol{\bar\theta}\in\mathcal{O}_r$ we define the \emph{symmetry-induced
critical points} of $\boldsymbol{\bar\theta}$ as:
\begin{equation*}
    \bar\Theta_{r\to m}(\boldsymbol{\bar\theta})=\left\{\boldsymbol{\theta}\in\boldsymbol{\Theta}_{m} \mid f(x;\boldsymbol{\theta})=f(x;\boldsymbol{\bar\theta})\wedge\boldsymbol{\theta}_i\neq0\,\forall i\right\} .
\end{equation*}

The measure zero set for which the parameters do not converge to the global minimum is the set of parameters that contain $0<\mu<m$ \emph{completely synchronized} neurons, i.e., there exist $\mu$ pairs of $i\neq j$, $i,j\in[m]$ such that $\boldsymbol{\theta}_i=\boldsymbol{\theta}_j$. 
In fact, for such neurons the gradient iteration is the same and so they have no chance of separating as time evolves, causing the whole dynamic to live in the subspace $\boldsymbol{\Theta}_{m-\mu}$ (and therefore to converge to $\mathcal{O}_{m-\mu}$). 
If the neurons are not completely synchronized, such subspace is unreachable in finite time \cite{zhang2026saddletosaddle}.
For a more in-depth analysis of the dynamical properties of the embedded critical points $\bar\Theta_{r\to m}(\mathcal{O}_{m-\mu})$, one can refer to \cite{fukumizu_semi-flat_2019}. With the theoretical ground provided by the previous results, we expect that for small data noise variance the learning dynamics, after visiting some saddle points including the plateau regions  and the optimal region $\mathcal{M}_m$, will eventually converge to a point in the overfitting region  $\mathcal{O}_m$, which is a set of pointwise attractors. 
A schematic view of this saddle–saddle–attractor scenario, representing the trajectory of the parameters, is depicted in Fig.~\ref{fig:schematic}. 
\begin{figure}[hbt]
\centering
\includegraphics[width=0.7\linewidth]{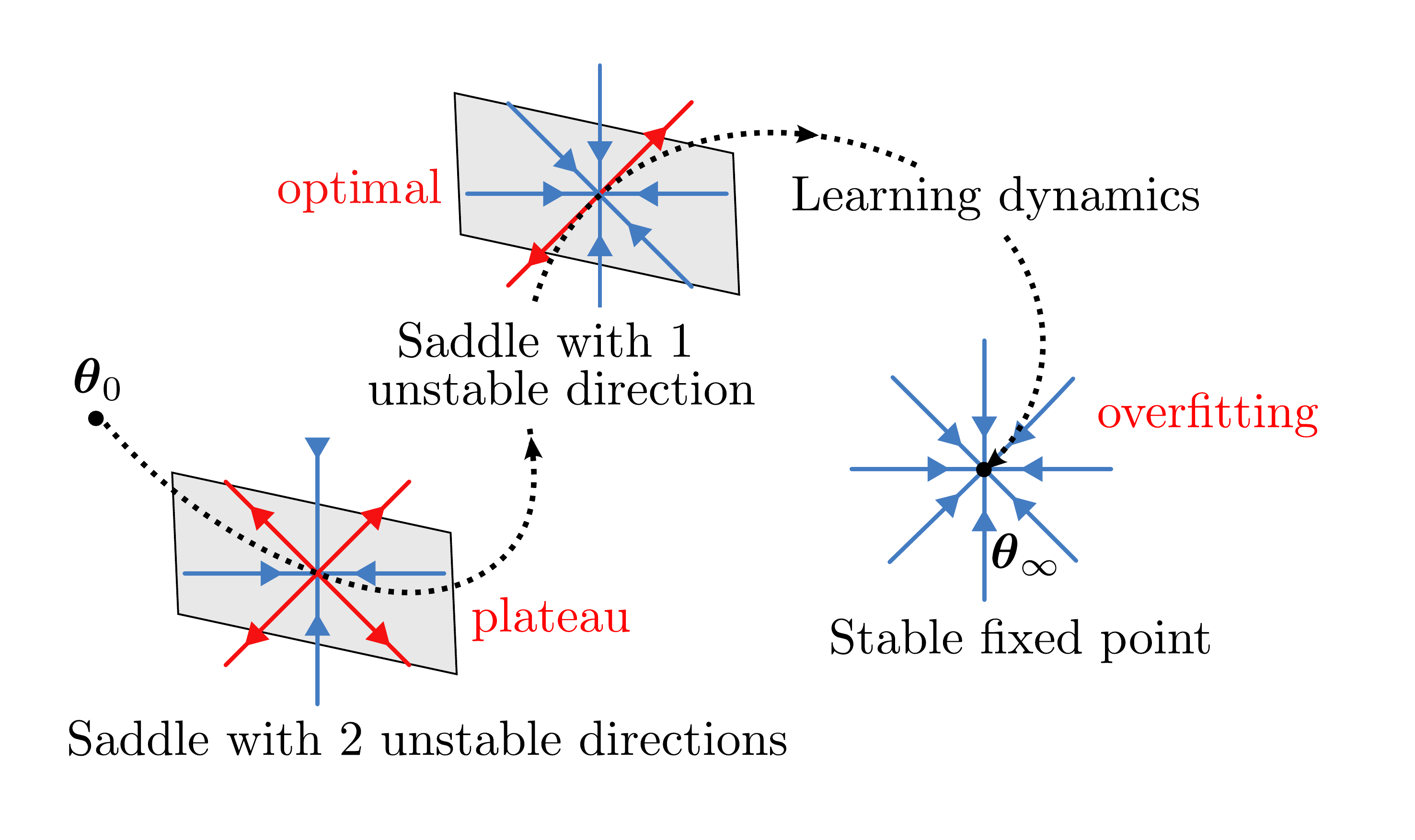}
\caption{A schematic representation of the saddle-saddle-attractor scenario in MLP gradient descent learning. Empirically, the number of positive eigenvalues is smaller near the optimal region than in the plateau region. The overfitting is a  stable attractor.}
\label{fig:schematic}
\end{figure}

\section{Minimal model and numerical experiments}\label{sec:min_model}

To confirm the saddle–saddle–attractor scenario illustrated in Fig.~\ref{fig:schematic} through numerical experiments, we propose a minimal MLP model that captures both vanishing gradient and overfitting phenomena. 
Our idea is to use a modified Fukumizu-Amari model:
\begin{itemize}    
    \item MLP with 1-input, 1-output, 3-layer, 2-neuron without bias;
$$f(x;\boldsymbol{\theta})=v_1 \tanh(w_1 x)+v_2 \tanh(w_2 x),$$
\item Fixed observational noise; $D=D_\tau^n(\omega)$ 
\item Target function;  $T(x)=2\tanh(x)$;
\end{itemize}
By means of this model, we will be able to represent the significant dynamics of the learning to give a clear view of the saddle-saddle-attractor scenario. From a dynamical system's perspective, we are focusing on the following 4-dimensional quenched random  map:
\begin{equation*}
\begin{aligned}
w_1(t+1)&=w_1(t)- \frac{\eta v_1(t)}{n} \sum_{i=1}^n \frac{x_ih(x_i,y_i;\boldsymbol{\theta})}{\cosh^2(w_1(t)x_i)},\\
w_2(t+1)&=w_2(t)-\frac{\eta v_2(t)}{n} \sum_{i=1}^n \frac{x_ih(x_i,y_i;\boldsymbol{\theta})}{\cosh^2(w_2(t)x_i)}, \\
v_1(t+1)&=v_1(t) - \frac{\eta}{n} \sum_{i=1}^n \tanh(w_1(t)x_i)h(x_i,y_i;\boldsymbol{\theta}), \\
v_2(t+1)&=v_2(t) - \frac{\eta}{n} \sum_{i=1}^n \tanh(w_2(t)x_i)h(x_i,y_i;\boldsymbol{\theta}) .
\end{aligned}
\end{equation*}

\noindent
where $h(x,y;\boldsymbol{\theta})=\|f(x;\boldsymbol{\theta})-y\|_2$ and $D=\left\{(x_i,y_i)\right\}_{i=1}^n$.

Note that in this simple setting we know that the singular region producing plateaus is: $\bar\Theta_{0\to 2}(0)\cup\bar\Theta_{1\to 2}(\mathcal{O}_1)$ and the optimal region is the embedding of $T$ in $\boldsymbol{\Theta
}_2$.

\subsection{Numerical experiments}
We present numerical evidence supporting the dynamical scenario of Fig.~\ref{fig:schematic} within the minimal model. We generate $n=100$ samples $x_i\sim\mathcal{N}(0,1)$ and construct two datasets, $D_0^{100}$ and $D_{0.2}^{100}$. Using the four-dimensional map introduced above, we perform gradient descent up to $t=2\times10^6$. For these parameters, the assumptions of Theorem \ref{th:main} are empirically satisfied, and all trajectories converge to the same solution. Figure \ref{fig:experiments} shows the learning curves (training and generalization errors in log–log scale) and parameter trajectories in $\boldsymbol{\Theta}_2$. The dynamics follow the schematic scenario: trajectories first approach a singular region, producing a plateau, then move toward near-optimal states with slow evolution, and finally escape to an overfitting solution. A key difference between $D_0^{100}$ and $D_{0.2}^{100}$ is that, in the noisy case, the training error remains flat near the optimum, whereas in the noiseless case it decreases again. Consistent with \cite{spectral_bias}, the early-stage dynamics are nearly identical in both cases, indicating that noise is learned only at later stages and ultimately drives overfitting. We also consider an extended setting with two neurons ($\boldsymbol{\theta}\in\boldsymbol{\Theta}_4$), where the same qualitative scenario persists. Spectral analysis suggests that the near-optimal singular region has fewer unstable directions than other degenerate regions. In particular, along the plateau the Hessian exhibits two positive eigenvalues, whereas near the optimal region it typically has only one, indicating reduced instability.


\begin{figure*}[t]
    \centering

    \begin{subfigure}{\textwidth}
        \centering
        \begin{subfigure}{0.48\textwidth}
            \centering
            \includegraphics[width=\linewidth]{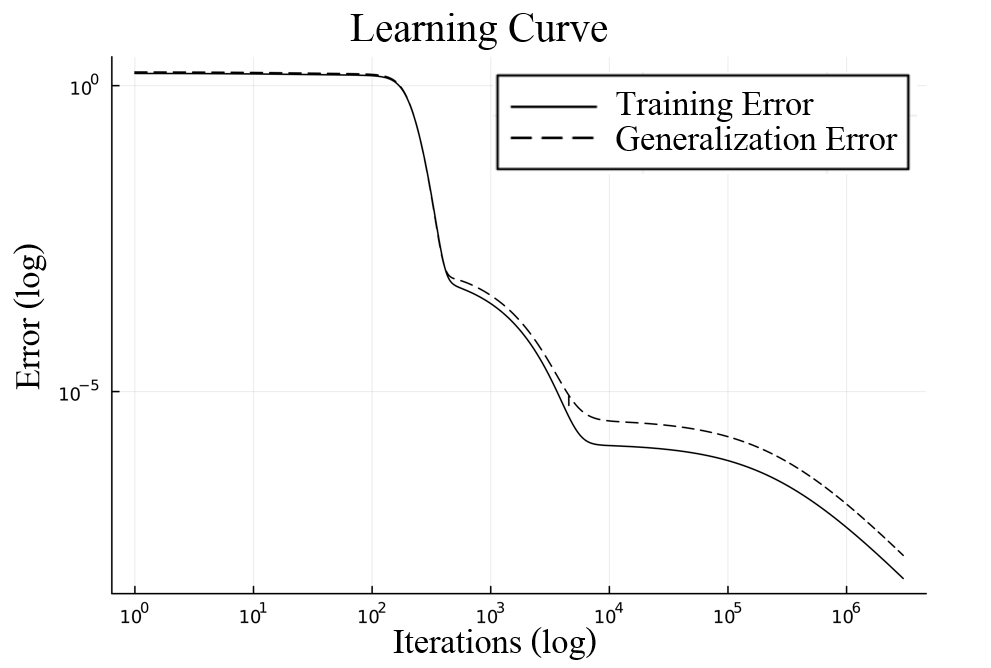}
        \end{subfigure}
        \hfill
        \begin{subfigure}{0.5\textwidth}
            \centering
            \includegraphics[width=\linewidth]{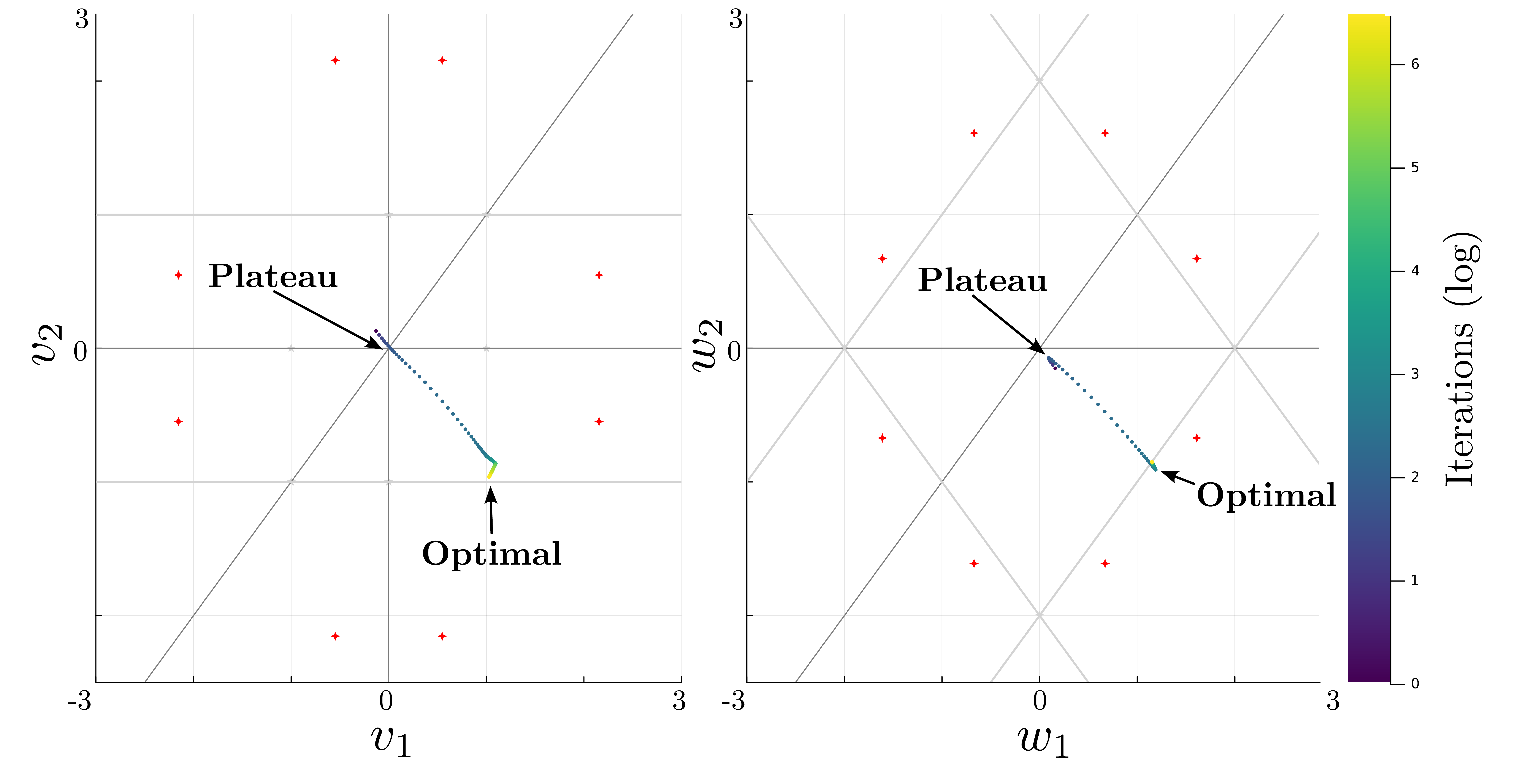}
        \end{subfigure}
        \caption{ }
    \end{subfigure}

    \vspace{0.1cm}

    \begin{subfigure}{\textwidth}
        \centering
        \begin{subfigure}{0.48\textwidth}
            \centering
            \includegraphics[width=\linewidth]{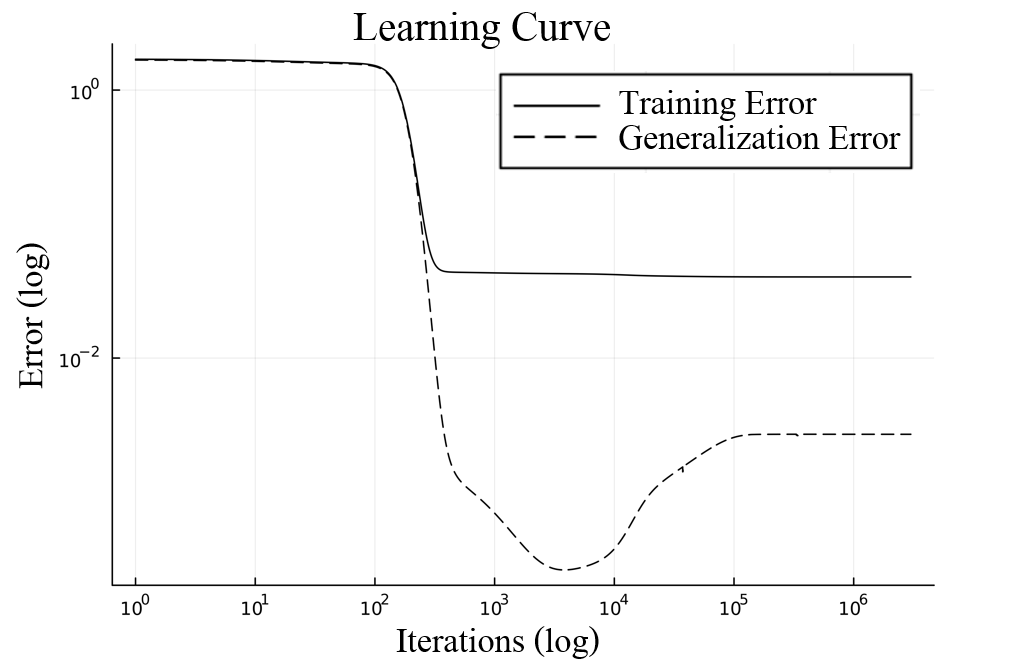}
        \end{subfigure}
        \hfill
        \begin{subfigure}{0.5\textwidth}
            \centering
            \includegraphics[width=\linewidth]{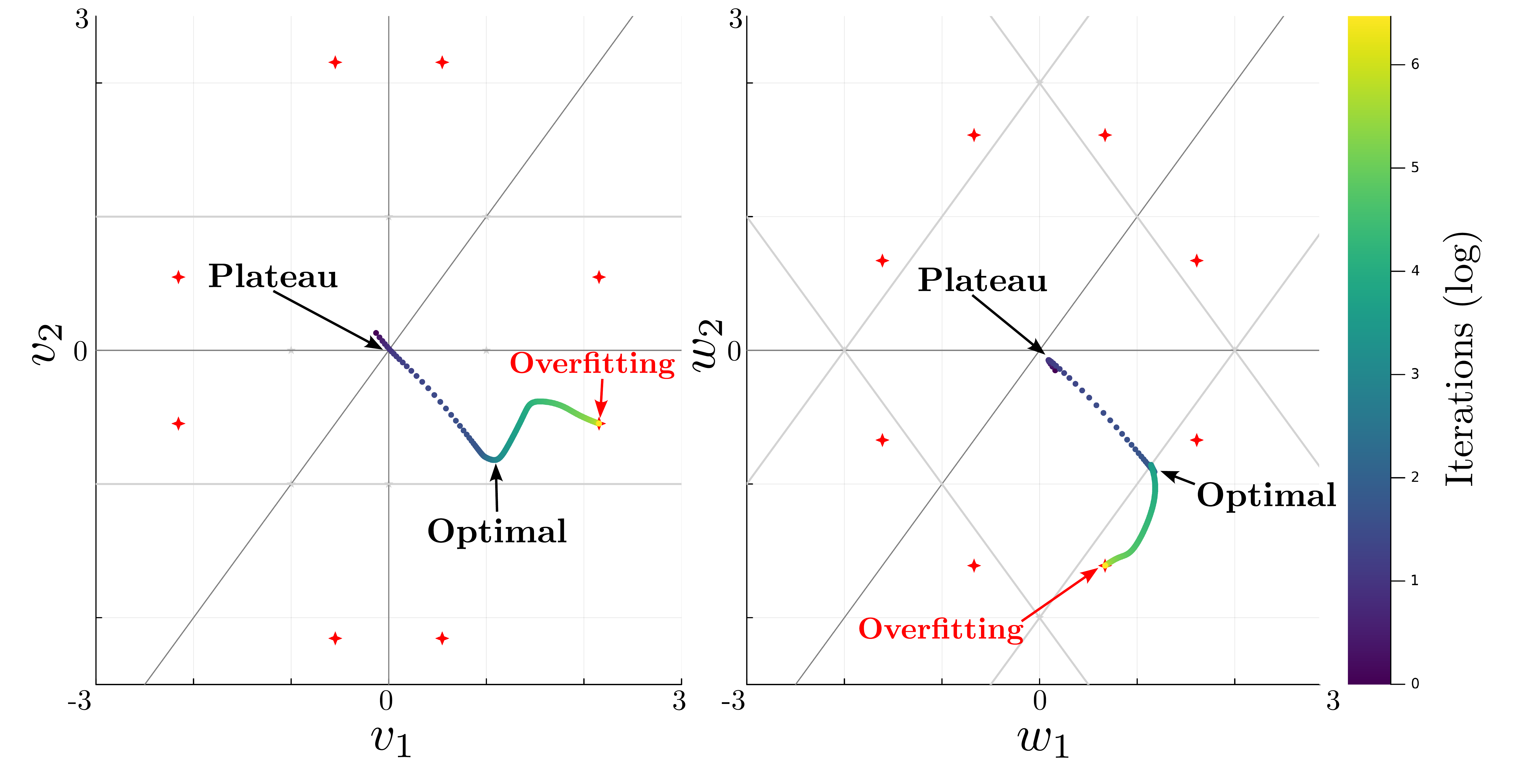}
        \end{subfigure}
        \caption{ }
    \end{subfigure}

    \caption{Graphs obtained after training the minimal model for 2 million iterations. In the first column we represent the learning curve; on the right there is a representation of the parameters' orbits during the training, where red stars are the overfitting points and grey lines model the singular region (dark grey) and optimal region (light gray). Plots in (a) represent the training without observational noise (i.e., $\tau=0$), while in (b) we added some small noise with $\tau=0.2$.}
    \label{fig:experiments}
\end{figure*}

\section{Conclusion}

Vanishing gradients and overfitting in neural networks are often studied in highly complex settings, where the underlying mechanisms remain obscured. Here we instead adopt a minimal framework that isolates the essential dynamics. In Section \ref{sec:min_model}, we introduce a two-neuron, bias-free multi-layer perceptron that approximates a one-neuron MLP from noisy observations and exhibits both plateaus and overfitting. Building on this setting, we prove in Section \ref{sec:thms} that, under mild assumptions, the nonlinear learning dynamics converge almost surely to a unique point modulo the symmetries of tanh networks. We further show (Fig.~\ref{fig:experiments}) that the stability of the optimal region is altered by observational noise: while the target parameters are attractors in the noiseless case, they become saddles for any $\tau>0$. Empirically, these saddles appear more attracting than those associated with other singular regions, leading to a saddle-saddle-attractor scenario. Despite its simplicity, this framework raises several open questions. In particular, it remains to characterize sharper conditions for uniqueness in Theorem \ref{th:main}, and to quantify the distance $\delta$ between optimal and singular regions as a function of $\tau$, which may guide systematic refinement of early-stopped solutions.

\section*{Acknowledgements}
Authors thank Professor Andrea Agazzi and Mr.~Taichi Yano.
The research leading to these results has been partially
funded by the Grant in-Aid
for Scientific Research (B) No. 21H01002, JSPS,
Japan. 

\section*{Declaration of competing interest}
The authors declare that they have no known competing financial interests or personal relationships that could have appeared to influence the work reported in this paper.

\section*{CRediT authorship contribution statement}
\textbf{Alex Alì Maleknia}: Writing – original draft, Software, Investigation, Formal analysis, Data curation. 
\textbf{Yuzuru Sato}: Writing – review \& editing, Validation, Investigation, Funding acquisition, Conceptualization.

\section*{Research data}
All the necessary tools and information to reproduce the experiments presented in this article can be found at:
\hyperlink{link}{https://github.com/alexmare2/Dynamical-structure-of-MLP-with-overfitting}.




\bibliographystyle{elsarticle-num}
\bibliography{refs}







\end{document}